
\documentclass[letterpaper, 10 pt, conference]{ieeeconf}  

\IEEEoverridecommandlockouts                              

\overrideIEEEmargins                                      


\usepackage{blindtext, graphicx}
\usepackage{amsmath}
\usepackage{url}
\usepackage{fancyhdr}
\usepackage{latexsym}
\usepackage{amssymb}
\usepackage{color}
\usepackage{comment}
\newtheorem{problem}{Problem}
\newtheorem{mydef}{Definition}
\newtheorem{theorem}{Theorem}

\newtheorem{example}{Example}

\newtheorem{assertion}{Assertion}

\usepackage{tikz}
\usepackage{lipsum}
\usepackage{mathtools}
\usepackage{cuted}
\usepackage{tikz}
\usetikzlibrary{arrows,positioning,chains,fit,shapes,automata} 
\usepackage{algorithm}
\usepackage{algorithmic}

\title{\LARGE \bf Vector Autoregressive POMDP Model Learning and Planning for Human-Robot Collaboration}
\author{Wei Zheng and Hai Lin
\thanks{The partial support of the National Science Foundation (Grant No. CNS-1446288, ECCS-1253488, IIS-1724070) and of the Army Research Laboratory (Grant No. W911NF- 17-1-0072) is gratefully acknowledged.}
\thanks{Wei Zheng and Hai Lin are with the Department of Electrical Engineering, University of Notre Dame, Notre Dame, IN, 46556 USA. {\tt\small wzheng1@nd.edu, hlin1@nd.edu}.}
}

\begin{document}

\maketitle
\thispagestyle{empty}
\pagestyle{empty}

\begin{abstract}
Human-robot collaboration (HRC) has emerged as a hot research area at the intersection of control, robotics, and psychology in recent years. It is of critical importance to obtain an expressive but meanwhile tractable model for human beings in HRC. In this paper, we propose a model called Vector Autoregressive POMDP (VAR-POMDP) model which is an extension of the traditional POMDP model by considering the correlation among observations. {The VAR-POMDP model is more powerful in the expressiveness of features than the traditional continuous observation POMDP since the traditional one is a special case of the VAR-POMDP model.} Meanwhile, the proposed VAR-POMDP model is also tractable, as we show that it can be effectively learned from data and we can extend point-based value iteration (PBVI) to VAR-POMDP planning. Particularly, in this paper, we propose to use the Bayesian non-parametric learning to decide potential human states and learn a VAR-POMDP model using data collected from human demonstrations. Then, we consider planning with respect to PCTL which is widely used as safety and reachability requirement in robotics. Finally, the advantage of using the proposed model for HRC is validated by experimental results using data collected from a driver-assistance test-bed. 


\end{abstract}

\section{Introduction}
Human-robot collaboration (HRC) studies how to achieve effective collaborations between human and robots to synthetically combine the strengths of human beings and robots. While robots have advantages in handling repetitive tasks with high precision and long endurance, human beings are much more flexible to changing factors or uncertain environments that are difficult for robots to adapt. Therefore, to establish an efficient collaboration between human and robots is the core problem in the design of the HRC system. 

To achieve an effective HRC, it is of critical importance to obtain an expressive but meanwhile tractable model for HRC. Among several types of models in the HRC literature, such as the ACT-R/E model \cite{trafton2013act}, the IDDM Model \cite{wang2013probabilistic} and the TLP model \cite{wang2018human}, the POMDP model has emerged as a popular choice in recent years \cite{nikolaidis2015efficient}\cite{zhang2019performance}. As a general probabilistic system model to capture uncertainties from actuation errors, sensing noises and human behaviors, the POMDP model provides a comprehensive framework for the system modeling and sequential decision making. 
Most of the existing results assume that the POMDP model is given \cite{broz2011designing}\cite{gopalan2015modeling} or the number of states is given \cite{jaulmes2005active}\cite{ross2008bayes} before learning the model from data. However, the state space could be tedious to be predefined and the number of hidden states could be case dependent especially when human is involved. In our previous work \cite{zheng2018pomdp}, we dropped these assumptions and proposed a Bayesian non-parametric learning approach to infer the structure of the POMDP model, such as the number of states, from data. However, hidden states of the POMDP model being learned can only model static properties of the observed data. For example, in the driving scenario, each state is a cluster of positions of the human hand. Then the human intention, say turning right, can be inferred if the observed position belongs to the cluster. Dynamic properties such as turning speed and acceleration could not be modeled and distinguished. This is because we did not consider the correlations among observations in the POMDP model.

In order to fill this gap, we propose a new type of model for HRC, called Vector Autoregressive POMDP (VAR-POMDP) model, which takes the observation correlation into consideration and hence extends the existing POMDP model. Our main objective in this paper is to show that the proposed VAR-POMDP model can achieve a good trade-off between model expressiveness and tractability.

The expressiveness of the proposed model is clear as the POMDP model becomes a special case of our proposed model. To illustrate the tractability of the proposed model, we investigate both the model learning and planning issues in this paper. First, in the model learning process, we do not assume the state space is given or the bound on the number of states is known. Our basic idea is to use a Bayesian non-parametric learning method to automatically identify the number of hidden states. Secondly, in the planning process, we consider the probabilistic computation tree logic (PCTL) as a formal specification since it is widely used as a safety and reachability requirement in robotic application \cite{zhang2015learning}. The PCTL bounded until model checking problem can be converted to a finite horizon dynamic programming problem. We show that the value function can be approximated by a piece-wise linear function by extending the PBVI on the VAR-POMDP model. The effectiveness of the proposed learning and planning algorithms are illustrated in real experiments.

The main contribution of this paper is twofold. First, the VAR-POMDP which considers the correlation of observations is proposed to model the HRC process and the corresponding learning framework is proposed to learn the VAR-POMDP model from demonstrations using the Bayesian non-parametric learning method. Secondly, the PBVI algorithm is extended to the VAR-POMDP model to solve a finite step dynamic programming problem and therefore the bounded until model checking problem.

The rest of the paper is organized as follows. Section \ref{sec:preliminaries} presents the formal definition of the VAR-POMDP model and formulates the problem. The learning framework with corresponding experiment results are shown in Section \ref{sec:mainresults}. Section \ref{sec:problem_formulation} shows how to extend the PBVI algorithm on the VAR-POMDP model and section \ref{sec:conc} concludes the paper. 

\section{VAR-POMDP Model}\label{sec:preliminaries}

The VAR-POMDP model is inspired from the autoregressive hidden Markov model (AR-HMM) which is an extension of the HMM. Distinct from the HMM which assumes the independence of observations, the autoregressive model specifies that the observed variable depends linearly on its previous values with certain uncertainties. This correlation property exhibits in the behavior of human motion \cite{abe2007study}. Thus, we consider the correlation of observations in the POMDP model and propose the VAR-POMDP model. 

\begin{mydef}\label{def:pomdp}
The VAR-POMDP model is defined as a tuple $P= (S, A, O, T, E, R, L)$
where 
\begin{itemize}
    \item $S$ is a finite set of states.
    \item $A$ is a finite set of decision actions.
    \item $O$ is a set of continuous observations. 
    \item $T :S \times A \times S \to [0, 1]$ is a transition function which defines the probability over the next state $s'$ after taking an action $a$ from the state $s$.
    \item $E : \mathbf{O} \times S \times O \to \mathbb{R}$ is an observation function which defines the distribution over the observation $o'$ that may occur in state $s'$ conditional on observation history $\mathbf{o}$.  
    \item $R: S \times A \to \mathbb{R} $ is a reward function. 
    \item $L: S \to 2^{AP}$ is a labelling function that assigns a subset of atomic propositions $AP$ to each state $s$.  
\end{itemize}
\end{mydef}
The difference between the VAR-POMDP model and the traditional POMDP model is the observation function. In traditional POMDP model, the observation function only dependents on the hidden state and the current observation. While in VAR-POMDP model, the observation function also relies on the observation history, namely, 
\begin{equation}
    o_t = \sum_{i=1}^r A_{i,s_t} o_{t-i} + e(s_t), 
\vspace{-1mm}    
\end{equation}
where $e(s_t) \sim \mathcal{N}(0,\Sigma_{s_t})$ are Gaussian noise modeling the uncertainty, matrices $\{A_{1,s_t},...,A_{r,s_t}\}$ are lag matrices under mode $s_t$. Note that the continuous observation POMDP with Gaussian emissions is a special case of this model when lag matrices are zero and the Gaussian mean is replaced with a constant vector. Instead of using a constant value to characterize the motion feature, the VAR-POMDP uses lag matrices and the covariance matrix to characterize the feature. {Thus, dynamic features can be expressed and identified by the VAR-POMDP.} 


\subsection{Probabilistic Computation Tree Logic}
To achieve a performance-guaranteed model learning and planning framework, we use the PCTL as the formal specification to guide the designing process. PCTL is a probabilistic extension of computation tree logic which allows for probabilistic quantification of described properties \cite{hansson1994logic}. The syntax of PCTL is as follows: 
\begin{equation}
\begin{array}{ll}
    \phi:: & = true \mid AP \mid \neg \phi \mid \phi_1 \land \phi_2  \mid P_{\bowtie p} \psi, \\
    \psi:: & = X \phi \mid \phi_1 \cup^{\leq k} \phi_2 \mid \phi_1 \cup \phi_2,
\end{array}
\vspace{-1mm}
\end{equation}
where $AP$ is an atomic proposition, $\bowtie \in \{\leq,<,\geq, >\}$, $p \in [0,1]$, $k \in \mathbb{N}$ and $\cup^{\leq k}$ stands for bounded until. The soft deadline property make PCTL a widely used specification language for probabilistic model checkers.


In this paper, we consider the problem of VAR-POMDP model learning and planning for the bounded until specification in PCTL. 

\begin{problem}
\label{pro:problem1}
Given training data collected from an HRC process, learn a VAR-POMDP model $P$. Based on the learned model $P$, together with a given initial belief $b_0$ over the state, a finite horizon $H$ and a PCTL bounded until specification,
\begin{equation}
\label{spec:bounded}
          P_{\leq p} [\phi_1 \cup^{\leq H} \phi_2],
\vspace{-1mm}
\end{equation}
check whether or not the specification is satisfied. 
\end{problem}

The PCTL specification specifies the upper bound of the probability that, in a finite $H$ step, there are some states along a path making $\phi_2$ holds and $\phi_1$ holds in all states prior to that state. For example, if $\phi_1 = {true}$ and $\phi_2 = {'Fail'}$, the specification bounds the probability of the system going to states that cause the system failure. The specification in equation (\ref{spec:bounded}) is satisfied if and only if $p_{b_0}^{max}(\phi_1 \cup^{\leq H} \phi_2) \leq p$ where $p_{b_0}^{max}(\phi_1 \cup^{\leq H} \phi_2)$ is the maximum satisfaction probability with respect to belief $b_0$. The model checking problem is converted to a finite step optimization problem.

On one hand, the PCTL specification gives a performance requirement of the system which guides the model learning and controller designing process. On the other hand, using PCTL as specification avoids further defining the reward function for the VAR-POMDP model. Although algorithms such as inverse reinforcement learning can be used to recover the reward function \cite{choi2011inverse}, it is hard to explain the physical meaning of the reward. For PCTL specification, the reward can be clearly explained as satisfaction probability. 

\section{VAR-POMDP Model Learning}\label{sec:mainresults}


The proposed framework to learn the VAR-POMDP model is shown in Figure \ref{fig:Overview}. The action set is assumed given since it represents the capability of the robot. Using the training data collected from the HRC system, the state space of the model is identified using the Bayesian non-parametric learning method. The whole state space could be the product of the state space of human, robots and the environment. Based on the identified state space, transition probability and observation distribution can be learned from data.



\begin{figure}[!t]
\centering
\includegraphics[width=0.9\linewidth]{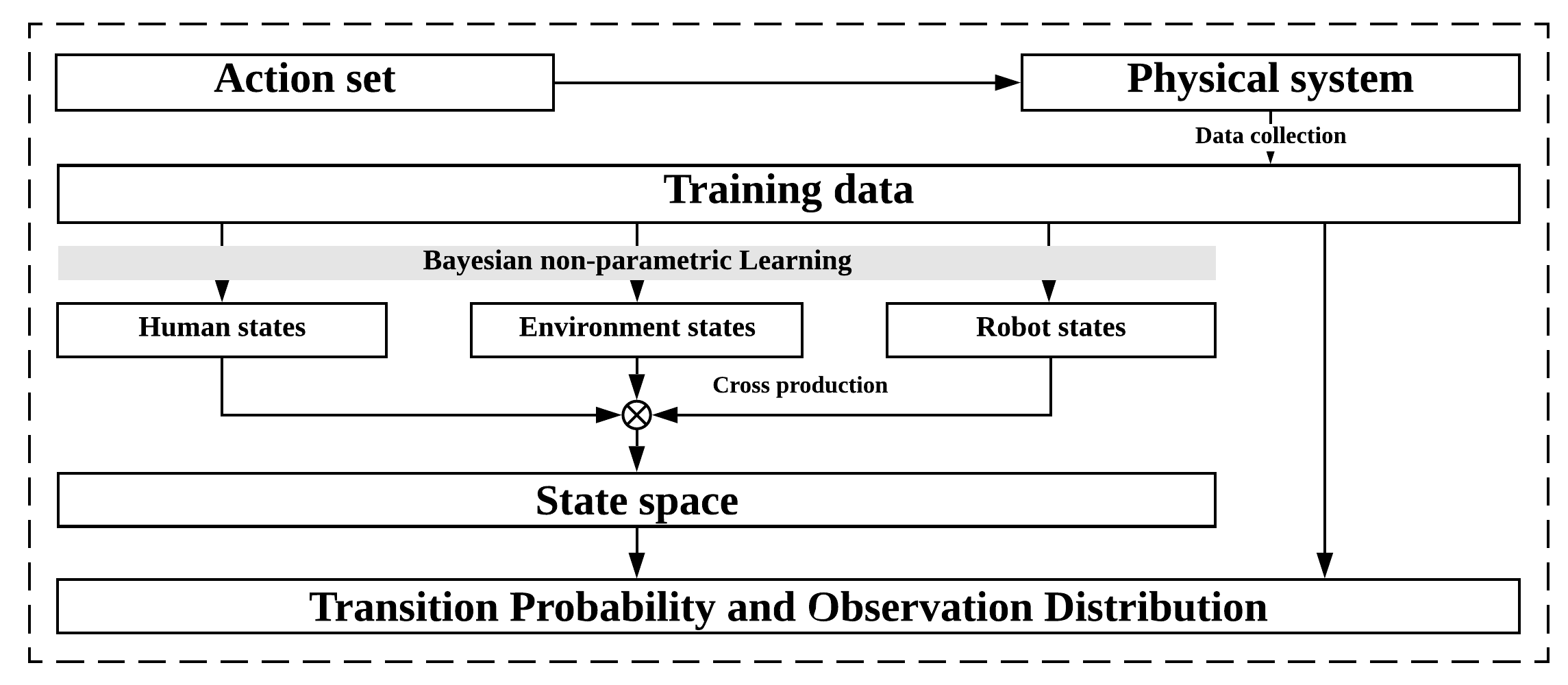}
\vspace{-3mm}
\caption{Overview of the proposed learning framework.}
\label{fig:Overview}
\vspace{-7mm}
\end{figure}

\subsection{Motion Feature Extraction}
\label{sec:MotionFE}
Instead of assuming the state space is given, we use the Bayesian non-parametric learning method to directly infer the state space from data. The training data consists of several $n$-dimensional time series which could be human motion trajectories collected from demonstrations. Taking advantages of the BP-AR-HMM framework proposed in \cite{fox2009sharing}, we do not only learn features of the data but also the number of features in a full Bayesian rule.

In the Bayesian non-parametric learning method, an AR-HMM is used as a generative model to model the relationship between the hidden features and observations. For each observed time series $Y^i=[y^i_{1},...,y^i_{T_i}]$ where $y_t^i \in \mathbb{R}^{n}$, we assume it is generated from the following model, 
\begin{equation}
\begin{aligned}
   z_t^{i} & \sim \pi_{z_{t-1}^{i}}^{i},\\
   y_t^{i} & = \sum_{j=1}^{r} A_{j,z_t^{i}} y_{t-j}^{i}+e_t^{i}(z_t^{i}),
    \end{aligned}
    \vspace{-2mm}
\end{equation}
where $e_t^{i}(k) \in \mathbb{R}^n$ is a zero-mean Gaussian noise with co-variance matrix $\Sigma_k$ to capture the uncertainty, and $r$ is the order of the autoregressive process. The variable $z^i_t \in \mathbb R$ is the hidden state and $\pi^i_j$ specifies the transition property for the state $j$. Each hidden state $z_t^i$ is characterized by a set of parameters $\theta_{z_t^i}=\{A_{1,z_t^{i}},...,A_{r,z_t^{i}},\Sigma_{z_t^{i}}\}$ where the parameters are used to characterize the corresponding features of the data. For example, in the scenario of using human motion trajectories as training data, the parameter $\{A_{1,z_t^{i}},...,A_{r,z_t^{i}}\}$ describes the dependence of the current position on historical positions. The physical meaning of features are motion patterns. 

{Compared with the HMM assumption which ignores the dependence among observations \cite{zheng2018pomdp}, the AR-HMM assumption makes the model be able to extract dynamic properties of the observing data since it considers the observing data as a dynamic system. Thus using the AR-HMM as the generative model is necessary when one cares about dynamic properties rather than static properties of the observing data.}

The traditional approach usually assumes the number of hidden states or the upper bound of hidden states is given. However, to get this prior knowledge is nontrivial especially when human is involved. In Bayesian non-parametric learning, a prior distribution is used and the number of hidden states can be inferred from data automatically. 
The BP-AR-HMM uses a Beta Process (BP) to generate a collection of an infinite number of points and assign each point a weight which is a flip coin probability. Then a Bernoulli Process (BeP) selects points that exhibit in each training data $Y^i$. These points are bond with the hidden states $z^i_t$ and therefore the feature parameters $\theta_{z_t^i}$. The Beta-Bernoulli Process together is used to model the correlation among time series. This process is summarized as follows,
\begin{equation}
\begin{aligned}
   B|B_0 & \sim \text{BP}(c,B_0),\\
   X_i|B & \sim \text{BeP}(B),\\
   \pi_j^{i}| f_i,\gamma, \kappa & \sim \text{Dir}([\gamma,...,\gamma+\kappa,\gamma,...] \otimes f_i),
    \end{aligned}
\vspace{-1mm}
\end{equation}
where $\text{Dir}$ stands for Dirichlet distribution, $B$ is a draw from the Beta process which provides a set of weights for the potentially infinite number of hidden states. For each time series $i$, an $X_i$ is drawn from a Bernoulli process parameterized by $B$. Each $X_i$ can be used to construct a binary vector $f_i$ indicating which of the global hidden state are selected in the $i^{th}$ time series. Then the transition probability $\pi^{i}_j$ of the AR-HMM is drawn from a Dirichlet distribution with self-transition bias $\kappa$ for each state $j$. 


The generative model is a total Bayesian model which implies that the model can be inferred from data according to the Bayes' rule. The parameters such as the hidden variable $z^i_t$ and $\theta_{z_t^i}$ can be learned from data using the Markov chain Monte Carlo (MCMC) method \cite{fox2014joint}. 

\begin{example}
A driver and hardware-in-the-loop simulation system is used as an example to validate the proposed approach. Markers are put on the left/right hand of the driver and the steering wheel, a time series of positions of these markers are collected using the Optitrack system. An example of the raw data is shown in Figure \ref{fig:rawdata} which consists of driving motions such as turning left and right. Using the Bayesian non-parametric learning method, motion features can be identified automatically. Figure \ref{fig:compare_whole} gives a comparison of learning results using HMM and AR-HMM as generative models for the same training data. Different motion features are labeled by different colors. From the result, there are only $3$ motion features detected using the HMM generative model while there are $53$ motion features detected using the AR-HMM generative model, which is much more than that of the HMM assumption. To give a more detailed comparison, data points from $1$ to $1000$ are zoomed out which is shown in Figure \ref{fig:detail}. From the results, some dynamic motions are not identified by the HMM model while they are detected using the AR-HMM. The reason behind this phenomenon is that the AR-HMM use a dynamic system to model the observed data and considers the correlation among observations while HMM assumes observation independence.

\begin{figure}[!t]
\centering
\includegraphics[width=0.9\linewidth]{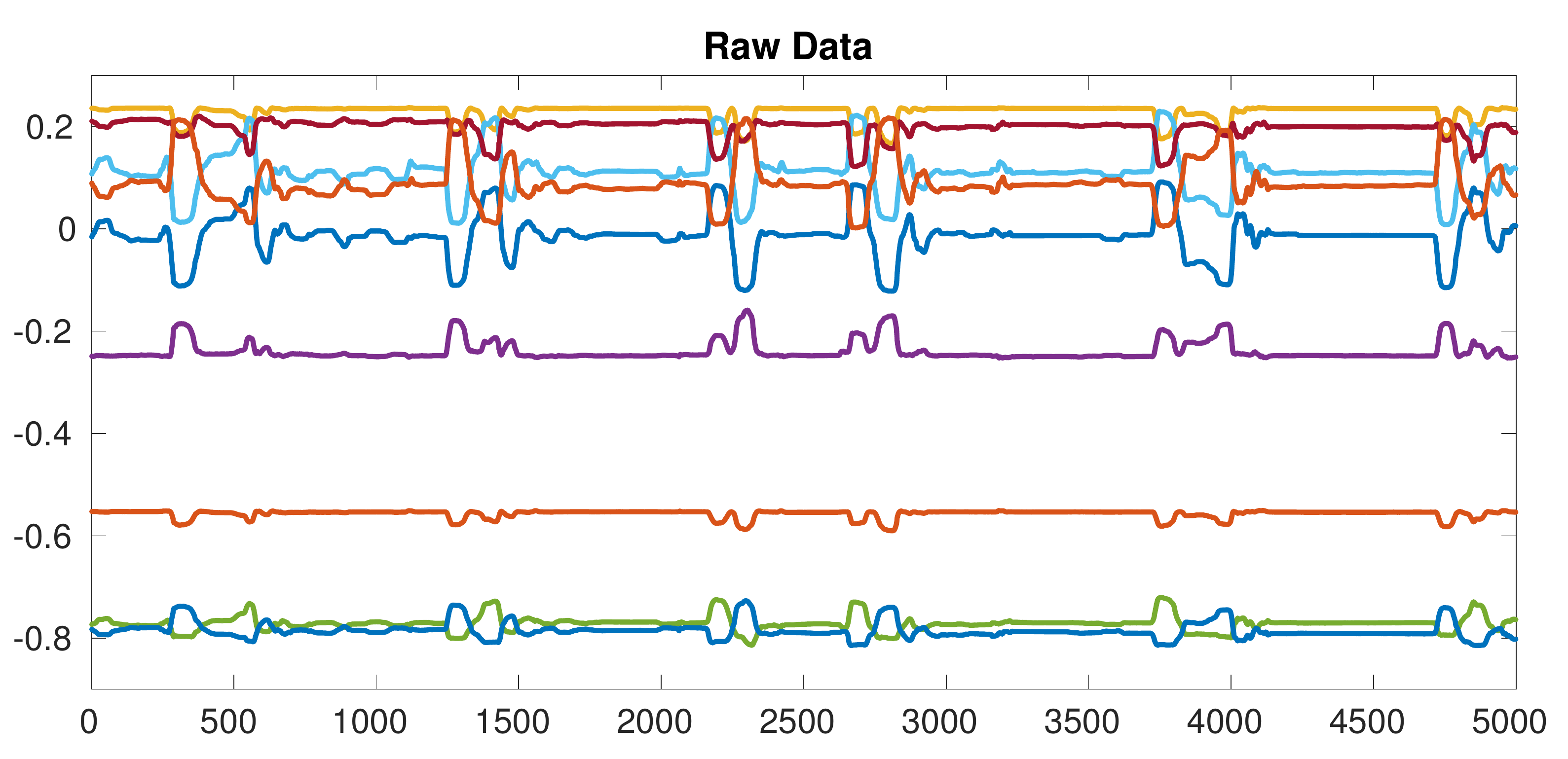}
\vspace{-3mm}
\caption{An example of the raw data collected from a driving experiment.}
\label{fig:rawdata}
\vspace{-6mm}
\end{figure}

\begin{figure}[!t]
\centering
\includegraphics[width=0.9\linewidth]{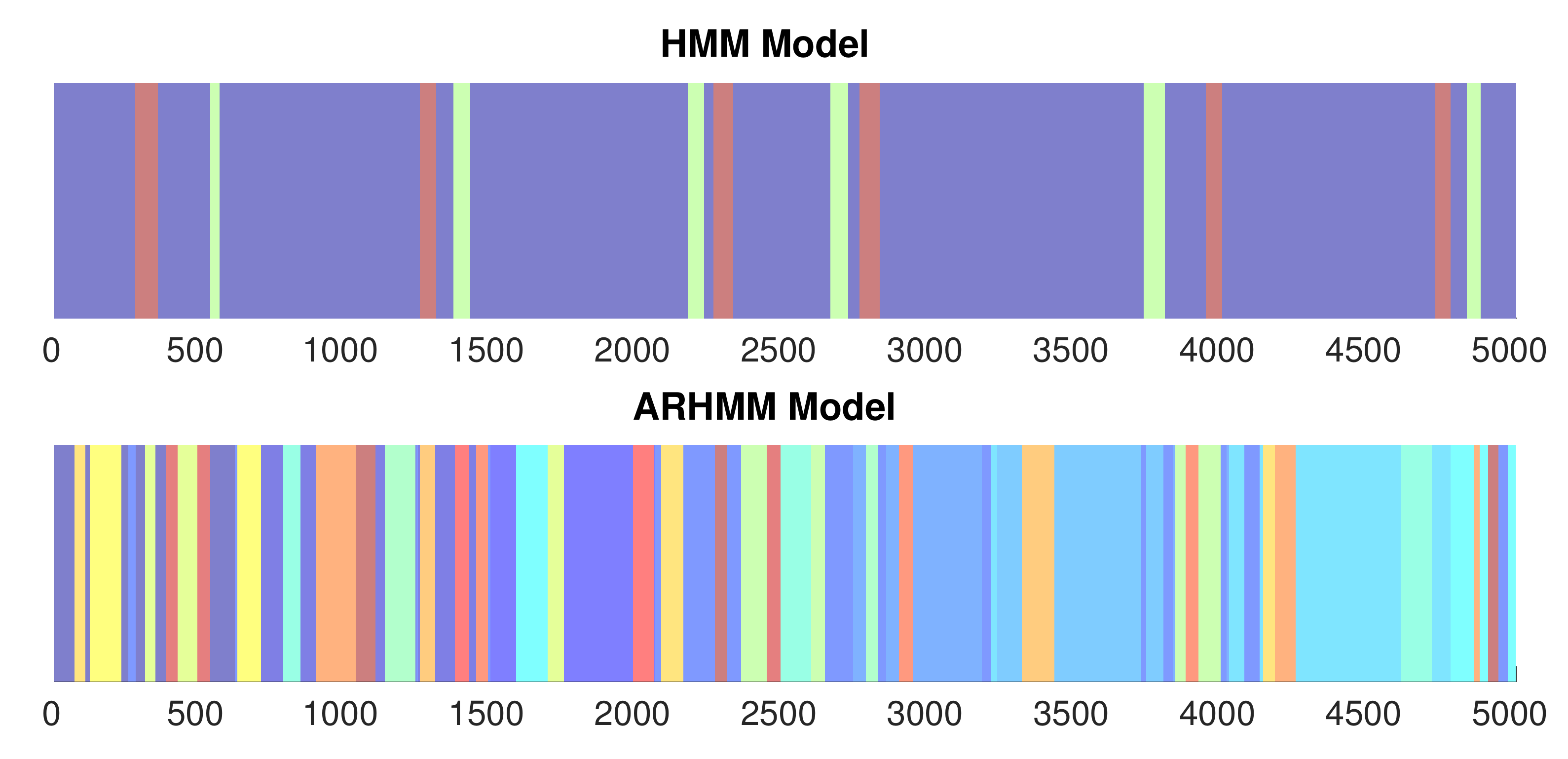}
\vspace{-3mm}
\caption{The learning results of using the Bayesian non-parametric method on human motion trajectories. }
\label{fig:compare_whole}
\vspace{-4mm}
\end{figure}

\begin{figure}[!t]
\centering
\includegraphics[width=0.9\linewidth]{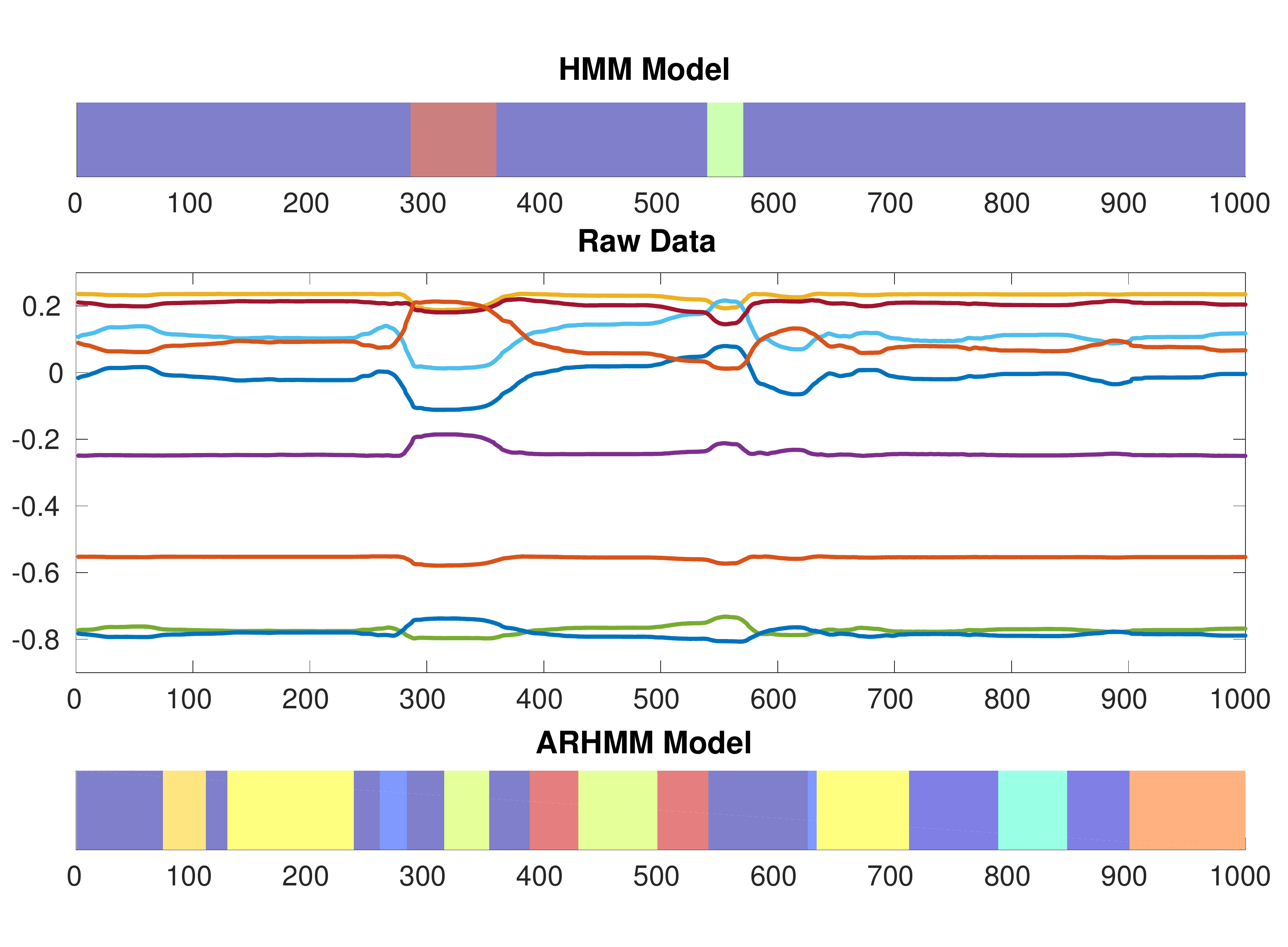}
\vspace{-3mm}
\caption{A detail comparison of using the HMM generative model and the AR-HMM generative model.}
\label{fig:detail}
\vspace{-6mm}
\end{figure}
\end{example}

\subsection{Construct VAR-POMDP model}
\label{sec:constructionPOMDP}
Based on the features identified in section \ref{sec:MotionFE}, the VAR-POMDP model can be directly constructed. First, the state space $S$ of the VAR-POMDP model is defined as the product of the state space of human, robots and the environment. The state space of the human can be defined as the union of motion feature $\theta_i$ identified in section \ref{sec:MotionFE}. Each state can be labeled manually with a physical meaning. According to the physical meaning, the labeling function can be defined. For example, in a driving scenario, the label could be $'safe','danger'$ or $'failure'$. The observation space $O$ is defined to be the $n$-dimensional vector space $\mathbb{R}^n$ which could be continuous sensor readings. The observation function $E$ is defined as a multivariate Gaussian distribution, 
\begin{equation*}
    E(o_t|\mathbf{o_{t-1}},s_t) \sim \mathcal{N}(\sum_{j=1}^{r} A_{j,s_t} o_{t-j}, \Sigma_{s_t}),
\vspace{-1mm}
\end{equation*}
where $\mathbf{o_{t-1}}$ is the observation history. 

After identifying the state space and observation distribution, our next step is to learn the transition probability. To learn the exact transition probability is difficult due to reasons such as limited data. In this case, modeling uncertainties will make the learned transition probabilities subject to a certain confidence level which motivates us to apply the Chernoff bound to reason the accuracy of the transition probabilities for VAR-POMDP. Details of the transitions probability learning can be found in \cite{zheng2018pomdp}. From the Chernoff bound, the estimation error of the transition probability can be sufficiently small with high confidence as long as the training data is sufficient enough.

\begin{example}
In this example, parts of the model are shown in Figure \ref{fig:mmexam1} to illustrate the model construction process. For the driver assistance system, actions are designed to increase car safety and road safety by providing real-time instructions, warnings or directly controlling the vehicles. For example, one action can be designed as instruction $'No \ right \ turn'$ and another one can be designed to increase the steering torque. Different actions have different influence on the human driver and therefore have different transition probability. 
\begin{figure}
\centering
    \begin{tikzpicture}[->,>=stealth', node distance=1.5cm]
        \node[state] (q0) {$s_0$};
        \node[state] (q1) [below of = q0] {$s_1$};
        \node[state] (q2) [right of = q1] {$s_2$};
        \path[->] (q0) edge              node [left] {$[a_1,a_2]$} (q1)
                  (q0) edge              node  {} (q2)
                  
                  (q1) edge [loop left]  node  {} (q1)
                  (q1) edge              node  {} (q2)
                  (q2) edge [loop right] node  {} (q2);
    \end{tikzpicture}
    \vspace{-3mm}
    \caption{An example of the VAR-POMDP for the driver assistance system. }   
    \label{fig:mmexam1}
    \vspace{-6mm}
\end{figure}
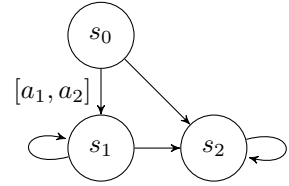
Each state is a composition of environment status and human intentions. State $s_0$ represents $(on\_lane, turn\_right)$, $s_1$ represents $(on\_lane, Normal\_driving)$ and $s_2$ represents $(off\_lane, \times)$ where $\times$ can be any human intentions. Based on observations, a belief is maintained over hidden states according to the Bayes' rule and an action can be selected to maximize the safety of the system. 
\end{example}

\section{VAR-POMDP Planning}\label{sec:problem_formulation}
In the previous section, we showed that the proposed  VAR-POMDP model can be effectively learned from data. This section aims to illustrate that the planning problem based on the proposed VAR-POMDP model is also tractable. Particularly, we consider PCTL specification as safety and reachability requirement and apply PBVI to solve the PCTL model checking problem.

\subsection{
Converting the PCTL model checking into a dynamical programming problem}\label{sec:reward}

We first show that the bound until specification model checking problem described in Problem \ref{pro:problem1} can be converted into a dynamic programming problem. The PCTL model checking problem is well studied for MDP model \cite{rutten2004mathematical}, we generalize the result to the VAR-POMDP model. The state space of the model can be divided into three disjoint subsets. 
\begin{align*}
S^{yes} &=  Sat(\phi_2)\\
S^{no}  &=   S \setminus (Sat(\phi_1) \cup Sat(\phi_2))\\
S^?     &=   S \setminus (S^{yes} \cup S^{no}).
\vspace{-2mm}
\end{align*}
All states in $S^{yes}$ satisfy $\phi_2$ and all states in $S^{no}$ dissatisfy $\phi_1$ and $\phi_2$. The state in $S^?$ satisfies $\phi_1$ but not $\phi_2$. Once the system runs to states in $S^{no}$, the satisfaction probability will be zero no matter where it goes in the future. And once the system runs to states in $S^{yes}$, the satisfaction probability dependents only on the running prefix. Thus changing the states in $S^{no}$ and $S^{yes}$ absorbed and assigning reward $1$ for states in $S^{yes}$ and $0$ for other states does not change the satisfaction probability. The maximum satisfaction probability can be solved recursively by value iteration, 
\begin{equation}
\label{eq:iteration}
    V_{t+1}(b,\mathbf{o})=\max_{a \in A} [\sum_{s,s'} b_s  \int_{o'} T_{s'}^{s,a} E_{o'}^{s',\mathbf{o}} V_{t}(b',\mathbf{o'}) {d}o'],
    \vspace{-2mm}
\end{equation}
where $b_s$ stands for the belief on state $s$, $T_{s'}^{s,a}$ and $E_{o'}^{s',o}$ are notations for transition probability and observation distribution, $b'$ is the posterior belief after observing $o'$ and $\mathbf{o}$ represents observation history. The value $V_t(b,\mathbf{o})=p_b^{max}(\phi_1 \cup^{\leq t} \phi_2)$ is the maximum probability that satisfies the specification when the belief is $b$ and observation history is $\mathbf{o}$. The initial condition $p_b^{max}(\phi_1 \cup^{\leq 0} \phi_2) = \sum_s b_s p^0_s$ where $p^0_s = 1$ if $s \in S^{yes}$ and $p_s^0 = 0$ otherwise.

\subsection{Point-based Value Iteration for VAR-POMDP}
{The main challenge arising from the VAR-POMDP model is the curse of dimensionality and the curse of history.} In equation (\ref{eq:iteration}), it is impossible to enumerate observations $o'$ since $o'$ is in continuous space. The value function is not only a function of belief $b$ but also a function of observation history $\mathbf{o}$. Thus the exact dynamic programming approach cannot be applied to solve the problem \cite{sondik1971optimal}. Inspired by the dynamic discretization approach \cite{hoey2005solving}, we propose to use the PBVI to solve the dynamic programming problem. 

In PBVI, a set of belief points $\{\tilde b^1,...,\tilde b^M\}$ is selected and the value function is updated only on these belief points. Thus PBVI gives an approximate solution where the approximation error dependents on the belief point selection. 

\begin{theorem}
The optimization problem defined recursively by equation (\ref{eq:iteration}) can be solved using PBVI algorithm on a predefined belief set $\{\tilde b^1,...,\tilde b^M\}$. At these belief points, the value function $V_t(b,\mathbf{o})$ can be approximated by a piece-wise linear and convex function, which is written as
\begin{equation}
\label{eq:approlinear}
    V_t(b,\mathbf{o}) = \max_{k} \sum_{s} b_s \alpha_s^{t,k}, 
    \vspace{-1mm}
\end{equation}
for a set of $\alpha$-vector $\{\alpha^{t,1}, ..., \alpha^{t,M}\}$ where $\alpha_s^{t,k}$ represents the $s^{th}$ element of the $k^{th}$ $\alpha$-vector for the $t$-step-to-go value function. 
\end{theorem}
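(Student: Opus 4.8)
The plan is to prove the piece-wise linear and convex (PWLC) structure by induction on the horizon $t$, mirroring the classical Sondik argument for finite POMDPs but carefully tracking how the observation history $\mathbf{o}$ enters through the autoregressive emission. First I would establish the base case: by the initial condition $V_0(b,\mathbf{o}) = \sum_s b_s p_s^0$ with $p_s^0 \in \{0,1\}$, which is trivially linear in $b$ (a single $\alpha$-vector whose $s$-th entry is $p_s^0$), hence PWLC, and note it does not depend on $\mathbf{o}$ at this stage. For the inductive step, I would assume $V_t(b,\mathbf{o}) = \max_k \sum_s b_s \alpha_s^{t,k}$ and substitute this into the value iteration recursion~(\ref{eq:iteration}). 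Expanding the posterior belief update $b'_{s'} \propto \sum_s b_s T_{s'}^{s,a} E_{o'}^{s',\mathbf{o}}$ via Bayes' rule, the normalizing constant cancels against the $E_{o'}^{s',\mathbf{o}}$ factor sitting outside $V_t$ in the integrand, so that $\sum_{s,s'} b_s T_{s'}^{s,a} E_{o'}^{s',\mathbf{o}} V_t(b',\mathbf{o'}) = \max_k \sum_{s'} \big(\sum_s b_s T_{s'}^{s,a}\big) E_{o'}^{s',\mathbf{o}} \alpha_{s'}^{t,k}$ — the standard cancellation that keeps everything linear in $b$.

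Next I would handle the continuous-observation integral and the $\max_k$ inside it, which is where the point-based approximation does its work. For each action $a$ and each candidate $\alpha$-vector index $k$, define the integrated quantity $g_{a,k,\mathbf{o}}(s) = \int_{o'} T$-weighted $E_{o'}^{s,\mathbf{o}} \alpha_{s}^{t,k}\, do'$ appropriately; because $E$ is a (multivariate Gaussian) density, these integrals are finite and well-defined. The obstacle is that the $\max_k$ appears under the integral sign, so the exact backup would require, at each $o'$, knowing which $\alpha$-vector is optimal for the induced posterior belief — a partition of observation space that changes with $\mathbf{o}$ and proliferates the history dependence. This is precisely the curse of history the text flags. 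The PBVI remedy is to restrict attention to the finite belief set $\{\tilde b^1,\dots,\tilde b^M\}$: instead of optimizing the $\alpha$-selection pointwise over $o'$, we select, for each sampled belief $\tilde b^m$ and action $a$, a single dominating $\alpha$-vector $\alpha^{t,k(m,a)}$ (the one maximizing $\sum_s \tilde b^m_s \alpha_s^{t,k}$ after the Bayesian update for a representative/expected observation), pull the resulting integral through, and thereby collapse the integral over $o'$ into a fixed linear functional of $b$. I would make explicit that this is an approximation and that the induced error is governed by the density of the belief sampling, consistent with the statement that "the approximation error depends on the belief point selection."

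Then I would assemble the new $\alpha$-vectors. For each action $a$ define the backed-up vector with $s$-th component $\alpha_s^{t+1,(a,m)} = R$-type contribution (here $0$ or absorbed into the $p^0$ terminal reward, per Section~\ref{sec:reward}) plus $\sum_{s'} T_{s'}^{s,a}\, \gamma_{a,m}(s')$, where $\gamma_{a,m}(s') = \int_{o'} E_{o'}^{s',\mathbf{o}} \alpha_{s'}^{t,k(m,a)}\, do'$ is the precomputed Gaussian-weighted integral. Taking the outer $\max_{a\in A}$ and the $\max$ over sampled beliefs $m$ yields $V_{t+1}(b,\mathbf{o}) = \max_{(a,m)} \sum_s b_s \alpha_s^{t+1,(a,m)}$, which is again a finite maximum of linear functions of $b$, hence PWLC, with at most $|A|\cdot M$ vectors before pruning down to the $M$-sized set claimed. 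The convexity is immediate since a pointwise maximum of linear functions is convex; the piece-wise linearity is immediate since the index set is finite.

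Finally I would close the induction and remark on the role of $\mathbf{o}$: the $\alpha$-vectors themselves absorb the observation-history dependence through the lag-matrix terms inside $E_{o'}^{s',\mathbf{o}}$ when the integrals $\gamma_{a,m}(s')$ are evaluated, so the representation $V_t(b,\mathbf{o}) = \max_k \sum_s b_s \alpha_s^{t,k}$ holds with $\alpha$-vectors that may depend on $\mathbf{o}$ but in which $V_t$ is linear in $b$ for each fixed history — exactly the form asserted in~(\ref{eq:approlinear}). The main obstacle, to reiterate, is rigorously controlling the exchange of $\max_k$ and the continuous integral over $o'$: the exact backup is not PWLC without discretization, and the proof must be honest that PBVI delivers a lower-bound approximation whose tightness is tied to the belief set, rather than an exact identity. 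I would therefore phrase the theorem's conclusion as: the PBVI backup preserves the PWLC form and produces a valid approximation to the recursion~(\ref{eq:iteration}), with the induction above as the justification.
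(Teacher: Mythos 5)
There is a genuine gap, and it concerns precisely the step that makes this theorem nontrivial for a VAR-POMDP. First, your treatment of the $\max_k$ under the integral is not the PBVI backup the paper uses: you propose to pick, for each $(\tilde b^m, a)$, a \emph{single} dominating $\alpha$-vector $\alpha^{t,k(m,a)}$ (chosen at a ``representative/expected observation'') and pull it out of the integral. But then $\gamma_{a,m}(s') = \int_{o'} E_{o'}^{s',\mathbf{o}}\,\alpha_{s'}^{t,k(m,a)}\,do' = \alpha_{s'}^{t,k(m,a)}$, since the Gaussian density integrates to one; the observation disappears from the backup entirely, which is an open-loop-style approximation, not the point-based backup whose error is later bounded by the density of the belief set. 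The paper instead partitions the observation space into dominance regions $z_k(\mathbf{o})$ (one per surviving $\alpha$-vector, per belief point), rewrites the backup as in equation (\ref{eq:iteration10}) with weights $\Pr(z_k(\mathbf{o})|s') = \int_{z_k(\mathbf{o})} E_{z'}^{s',\mathbf{o}}\,dz'$, and estimates these region probabilities by Monte Carlo sampling; this retains the value of observing while keeping the backup linear in $b$.

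Second, and more importantly, you concede at the end that the resulting $\alpha$-vectors ``may depend on $\mathbf{o}$,'' with the dependence ``absorbed'' through the lag matrices. That is exactly what the theorem needs to rule out, and the paper proves it: by the change of variables $\hat o = o' - \sum_{j=1}^{r} A_{j,s'} o_{t-j}$, the region probability satisfies $\Pr(z_k(\mathbf{o})|s') = \int_{z_k} E_{\hat z}^{s'}\,d\hat z$ with $E_{\hat o}^{s'} \sim \mathcal{N}(0,\Sigma_{s'})$, so it is invariant to the observation history; consequently the update (\ref{eq:alphaupdate}) produces $\alpha$-vectors that are constant in $\mathbf{o}$, which is what makes the representation (\ref{eq:approlinear}) and the whole algorithm tractable (the sampling and the $\alpha$-vector set can be computed once, rather than per history). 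Without this history-invariance argument your induction does not establish the claimed form -- history-dependent $\alpha$-vectors would reintroduce the curse of history the construction is designed to eliminate. Your base case and the $t=1$ step (where the Gaussian integrates out because $\alpha^{0,1}$ is constant, as in equation (\ref{eq:step1})) match the paper, but the inductive step needs the dominance-region decomposition and the shift-invariance argument to go through.
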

\begin{proof}
The theorem is proved by induction. Assume that the value function can be expressed as $V_t(b,\mathbf{o}) = \max_{k} \sum_{s} b_s \alpha_s^{t,k}$. When $t=0$, the initial $\alpha$-vector $\alpha^{0,1}$ is defined corresponding to the reward of the reaching state. According to the $0$-step-to-go value function defined in section \ref{sec:reward}, the initial $\alpha$-vector $\alpha^{0,1} = p^0$ and it is a constant vector. When $t=1$, substitute the updated belief 
\begin{equation}
    b'_{s'} = \frac{E^{s',\mathbf{o}}_{o'} \sum_s T^{s,a}_{s'} b_s}{\sum_{s'} E^{s',\mathbf{o}}_{o'} \sum_s T^{s,a}_{s'} b_s},
\end{equation}
and value function (\ref{eq:approlinear}) into the right hand of equation (\ref{eq:iteration}),  
\begin{equation}
\label{eq:step1}
    \begin{aligned}
    V_1(b,\mathbf{o}) = & \max_{a \in A} [ \sum_s b_s \int_{o'} \sum_{s'} T_{s'}^{s,a} E_{o'}^{s',\mathbf{o}} \alpha^{0,1}_{s'}{d}o']\\
     = &  \max_{a \in A} [\sum_s b_s \sum_{s'} T_{s'}^{s,a} \alpha^{0,1}_{s'}].
    \end{aligned}
    \vspace{-2mm}
\end{equation}
For each point in the belief set $\{\tilde b^1,...,\tilde b^M\}$, we select the action $a$ that maximizes the expected reward. Then a set of $\alpha$-vector $\{\alpha^{1,k}\} , k \in \{1,...,M\}$ is updated. 

Assume for all $t \geq 2$, the value function can be expressed as $V_{t-1}(b,\mathbf{o}) = \max_{k} \sum_{s} b_s \alpha_s^{t-1,k}$. 
Then the $t$-step-to-go value function can be expressed as 
\begin{equation}
V_t(b,\mathbf{o}) = \max_{a \in A} [ \int_{o'} \max_{k} \sum_{s,s'} b_s T_{s'}^{s,a} E_{o'}^{s',\mathbf{o}} \alpha^{t-1,k}_{s'}{d}o'].
\end{equation}
Due to the $\max$ function inside the integral, directly calculating the integration is not tractable. Inspired by the work of \cite{hoey2005solving}, we break the observation space into sub-space and use the sampling-based approach to approximate the integration. 

Let $z_k(\mathbf{o})$ be the sub-space of the observation $o'$ that makes the expected reward maximum for a specific belief point and $\alpha$-vector. For a given belief $\tilde b^i$  and $\alpha$-vector $\alpha^{t-1,k}$, 
\begin{equation}
    z_k(\mathbf{o}) = \{o'| \max_{l} [\sum_s \tilde b^i_s \sum_{s'} T_{s'}^{s,a} E_{o'}^{s',\mathbf{o}} \alpha^{t-1,l}_{s'}] = k \}.
    \vspace{-1mm}
\end{equation}
Then the value iteration is converted to 
\begin{equation}
\label{eq:iteration10}
    \begin{aligned}
    V_t(b,\mathbf{o}) = & \max_{a \in A} [\sum_{k} \sum_{s,s'} b_s T_{s'}^{s,a} \alpha_{s'}^{t-1,k} \int_{z_k(\mathbf{o})} E_{z'}^{s',\mathbf{o}} {d}z' ]\\
    = &\max_{a \in A} [ \sum_{k} \sum_{s,s'} b_s T_{s'}^{s,a} \alpha_{s'}^{t-1,k} \text{Pr} (z_k(\mathbf{o})|s')],
    \end{aligned}
    \vspace{-2mm}
\end{equation}
where $\text{Pr} (z_k(\mathbf{o})|s') =  \int_{z_k(\mathbf{o})} E_{z'}^{s',\mathbf{o}} {d}z'$. To calculate the integral directly is not tractable, thus sampling approach is used to approximate the integration. 

The sub-space $z_k(\mathbf{o})$ currently is a function of observation history $\mathbf{o}$ which causes the sampling from $E_{o'}^{s',o}$ intractable. However, it can be shown that the probability $\text{Pr} (z_k(\mathbf{o})|s')$ does not dependent on observation history $\mathbf{o}$. 

Let $\hat o = o'-\sum_{j=1}^{r} A_{j,s'} o_{t-j}$ be a new variable for which the observation distribution is $E^{s'}_{\hat o} \sim \mathcal{N}(0, \Sigma_{s'})$. Then 
\begin{equation}
    \text{Pr} (z_k(\mathbf{o})|s') =  \int_{z_k(\mathbf{o})} E_{z'}^{s',\mathbf{o}} {d}z' =  \int_{z_k} E_{\hat z}^{s'} {d} \hat z.
\end{equation}
where $z_k = \{ \hat o| \max_{l} [\sum_s \tilde b^i_s \sum_{s'} T_{s'}^{s,a} E_{\hat o}^{s'} \alpha^{t-1,l}_{s'}] = k \}$. The the probability $\text{Pr} (z_k(\mathbf{o})|s')$ is no longer a function of observation history and can be approximated using the Monte Carlo method. For each state $s'$, we sample $L$ observations $\hat o$ from $E_{\hat o}^{s'}$ and approximate the integration by 
\begin{equation}
    \text{Pr} (z_k|s') = \frac{l_{mc}^k}{L},
\end{equation}
where $l_{mc}^k$ is the number of samples that fall into the sub-region $z_k$. Back to the value iteration equation (\ref{eq:iteration10}), the $t$-step-to-go $\alpha$-vector $\alpha^{t,k}$ is not a function of $\mathbf{o}$. Thus the value function $V_t(b,\mathbf{o}) = \max_{k'} \sum_{s} b_s \alpha_s^{t,k'}$ where 
\begin{equation}
\label{eq:alphaupdate}
    \alpha_s^{t,k'} = \max_{a \in A} [ \sum_{k} \sum_{s'} T_{s'}^{s,a} \alpha_{s'}^{t-1,k} \text{Pr} (z_k(\mathbf{o})|s')]. 
\end{equation}
Then $\max$ function of equation (\ref{eq:alphaupdate}) is evaluated on belief set $\{\tilde b^1,...,\tilde b^M\}$. By induction, we conclude that the value function $V_t(b,\mathbf{o})$ can be approximated by a piece-wise linear and convex function for all $t \geq 1$. 
\end{proof}

{From the proof, it is shown that although the correlation of observations is considered, the piece-wise linear function used to approximate the value function $V_t(b,\mathbf{o})$ is not a function of observation history $\mathbf{o}$. At each time step, $M|A|$ $\alpha$-vectors are created while the number of final $\alpha$-vectors stored is limited to $M$ (in time complexity $M^2 |S||A|$). Thus, the whole value iteration takes polynomial time and the size of $\alpha$-vectors remains constant. The PBVI is an approximation algorithm, it scarifies the accuracy of the solution to achieve efficiency. 
The approximation error depends on how densely the belief set $\{\tilde b^1,...,\tilde b^M\}$ samples from the belief simplex $\Delta$. Following the proof of Lemma $1$ in \cite{pineau2003point}, it can be shown that the approximation error is bounded by the density of the belief set. 

\begin{assertion}
The error induced by the point-based value iteration algorithm for the VAR-POMDP model is bounded by the density $\epsilon_B$ of the belief set $B = \{\tilde b^1,...,\tilde b^M\}$, which is defined as $\epsilon_B = \max_{b' \in \Delta} \min_{b \in B} ||b - b'||_1$.  
\end{assertion}

\begin{proof}
Let $\tilde b'$ be the point where point-based value iteration makes the pruning error worst and let $\tilde b^i$ be the closest belief to $\tilde b'$. Let $\alpha'$ be the vector maximal at $\tilde b'$ and $\alpha^{H,j}$ be the vector maximal at $\tilde b^i$. Then it is easy to get $ \alpha' \tilde b^i \leq \alpha^{H,j} \tilde b^i$. 
The pruning error $e$ for the $H$-step-to-go value function is bounded by
\begin{equation}
    \begin{aligned}
        e  & \leq \alpha' \tilde b' - \alpha^{H,j} \tilde b' \\
        & = \alpha' \tilde b' - \alpha^{H,j} \tilde b' + \alpha' \tilde b^i - \alpha' \tilde b^i \\
        & \leq \alpha' \tilde b' - \alpha^{H,j} \tilde b' + \alpha^{H,j} \tilde b^i - \alpha' \tilde b^i\\
        & = (\alpha' - \alpha^{H,j})( \tilde b' - \tilde b^i )\\
        & \leq ||\alpha' - \alpha^{H,j}||_\infty || \tilde b' - \tilde b^i ||_1\\
        & \leq \epsilon_B
    \end{aligned}
\end{equation}
The last inequality holds because the $\alpha$-vector represents the achievable reward which is bounded by $1$ in our case. 
\end{proof}
Intuitively, the proof shows that the more densely the belief set $\{\tilde b^1,...,\tilde b^M\}$ is selected the smaller the approximation error will be. 

The value iteration algorithm is summarized in Algorithm $1$. Since it is a finite-step value iteration, the algorithm will always converge.}

\begin{algorithm}
\label{alg:pbvi}
\caption{PBVI Algorithm for VAR-POMDP}
    \begin{algorithmic}[1]
    \renewcommand{\algorithmicrequire}{\textbf{Input:}}
    \renewcommand{\algorithmicensure}{\textbf{Output:}}
    \REQUIRE VAR-POMDP model $P$, Belief Points $\{\tilde b^1,...,\tilde b^M\}$. 
    \ENSURE  Vector set $\{\alpha^{i,1},...,\alpha^{i,M}\},i \in \{1,...,H\}$.
    \STATE Initialize $\alpha$-vector $\alpha^{0,1} = p^0$. 
    \WHILE{$t \leq H$}
        \IF{$t=1$}
        \FOR{$\tilde b^i \in \{\tilde b^1,...,\tilde b^M\}$}
            \STATE Update $\alpha$-vector according to equation (\ref{eq:step1}).
        \ENDFOR
        \ELSE        
        \FOR{$\tilde b^i \in \{\tilde b^1,...,\tilde b^M\}$}
            \STATE Approximate $\text{Pr} (z_k|s')$;
            \STATE Update $\alpha$-vector according to equation (\ref{eq:alphaupdate}).
        \ENDFOR            
        \ENDIF
        \STATE $t = t+1$.
    \ENDWHILE
    \end{algorithmic}
\end{algorithm}

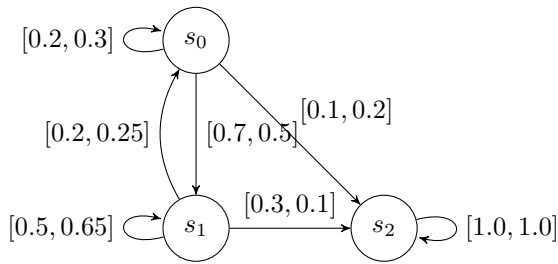
\begin{figure}
\centering
    \begin{tikzpicture}[->,>=stealth',node distance=2.5cm]
        \node[state] (q0) {$s_0$};
        \node[state] (q1) [below of = q0] {$s_1$};
        \node[state] (q2) [right of = q1] {$s_2$};
        \path[->] (q0) edge [loop left]  node         {$[0.2,0.3]$} (q0)
                  (q0) edge              node [right]  {$[0.7,0.5]$} (q1)
                  (q0) edge              node [above right] {$[0.1,0.2]$} (q2)
                  
                  (q1) edge [bend  left]  node [left] {$[0.2,0.25]$} (q0)           
                  (q1) edge [loop left]  node [left]  {$[0.5,0.65]$} (q1)
                  (q1) edge              node [above] {$[0.3,0.1]$} (q2)
                  
                  (q2) edge [loop right] node          {$[1.0,1.0]$} (q2);
    \end{tikzpicture}
    \vspace{-3mm}
    \caption{The transition relation of a three-state VAR-POMDP model.}
    \label{fig:exam2}
    \vspace{-6mm}
\end{figure}

\begin{example}
A three-state VAR-POMDP model shown in Figure \ref{fig:exam2} is used to validate the PBVI algorithm. The transition probability of action $a_1$ and $a_2$ are shown in the figure. Each state represents a motion feature of a three-dimensional motion trajectory and state $s_2$ are labeled as $'Fail'$. The belief points are selected to be $[1,0,0],[0,1,0],[0.5,0.5,0],[0.6,0.3,0.1],[0.3,0.4,0.3]$. The PCTL specification is given as 
$P_{\leq 0.5} [\phi_1 \cup^{\leq 4} \phi_2]$ with $\phi_1 = true$ and $\phi_2 =$ $'Fail'$. Using the PBVI, five $\alpha$-vectors are solved which are $[0.67,0.63,1]$,$[0.63,0.71,1]$, $[0.63,0.71,1]$,$[0.64,0.71,1]$,$[0.64,0.71,1]$. If the initial belief is $[1,0,0]$, then the satisfaction probability is $0.67$. Since the upper bound is $0.5$, the specification is not satisfied. 
\end{example}

\section{Conclusion}\label{sec:conc}

In this paper, we proposed the VAR-POMDP model for HRC which is an extension of the traditional POMDP model by considering the correlation among observations. We showed the tractability of the proposed model by providing a learning framework and a planning algorithm. In the learning framework, we proposed to use Bayesian non-parametric methods to learn the VAR-POMDP model from demonstrations effectively. We proved that the PBVI algorithm can be extended on the VAR-POMDP model to solve a model checking problem for bounded until specification in PCTL. In both the learning and planning process, approximations were used to estimate parameters of the model including the transition probability, observation distribution and the potential belief points. Evaluating the influence of these approximations on system performance will be future work.  

\bibliographystyle{unsrt}
\bibliography{ARPOMDP}

\begin{thebibliography}{10}

\bibitem{trafton2013act}
J~Gregory Trafton, Laura~M Hiatt, Anthony~M Harrison, Franklin~P Tamborello~II,
  Sangeet~S Khemlani, and Alan~C Schultz.
\newblock Act-r/e: An embodied cognitive architecture for human-robot
  interaction.
\newblock {\em Journal of Human-Robot Interaction}, 2(1):30--55, 2013.

\bibitem{wang2013probabilistic}
Zhikun Wang, Katharina M{\"u}lling, Marc~Peter Deisenroth, Heni Ben~Amor, David
  Vogt, Bernhard Sch{\"o}lkopf, and Jan Peters.
\newblock Probabilistic movement modeling for intention inference in
  human--robot interaction.
\newblock {\em The International Journal of Robotics Research}, 32(7):841--858,
  2013.

\bibitem{wang2018human}
Weitian Wang, Rui Li, Yi~Chen, and Yunyi Jia.
\newblock Human intention prediction in human-robot collaborative tasks.
\newblock In {\em Companion of the 2018 ACM/IEEE International Conference on
  Human-Robot Interaction}, pages 279--280. ACM, 2018.

\bibitem{nikolaidis2015efficient}
Stefanos Nikolaidis, Ramya Ramakrishnan, Keren Gu, and Julie Shah.
\newblock Efficient model learning from joint-action demonstrations for
  human-robot collaborative tasks.
\newblock In {\em Proceedings of the Tenth Annual ACM/IEEE International
  Conference on Human-Robot Interaction}, pages 189--196. ACM, 2015.

\bibitem{zhang2019performance}
Xiaobin Zhang and Hai Lin.
\newblock Performance guaranteed human-robot collaboration with pomdp
  supervisory control.
\newblock {\em Robotics and Computer-Integrated Manufacturing}, 57:59--72,
  2019.

\bibitem{broz2011designing}
Frank Broz, Illah Nourbakhsh, and Reid Simmons.
\newblock Designing {POMDP} models of socially situated tasks.
\newblock In {\em RO-MAN, 2011 IEEE}, pages 39--46. IEEE, 2011.

\bibitem{gopalan2015modeling}
Nakul Gopalan and Stefanie Tellex.
\newblock Modeling and solving human-robot collaborative tasks using {POMDP}s.
\newblock In {\em Proc. Robot., Sci. Syst.}, 2015.

\bibitem{jaulmes2005active}
Robin Jaulmes, Joelle Pineau, and Doina Precup.
\newblock Active learning in partially observable markov decision processes.
\newblock In {\em European Conference on Machine Learning}, pages 601--608.
  Springer, 2005.

\bibitem{ross2008bayes}
Stephane Ross, Brahim Chaib-draa, and Joelle Pineau.
\newblock Bayes-adaptive pomdps.
\newblock In {\em Advances in neural information processing systems}, pages
  1225--1232, 2008.

\bibitem{zheng2018pomdp}
Wei Zheng, Bo~Wu, and Hai Lin.
\newblock Pomdp model learning for human robot collaboration.
\newblock In {\em 2018 IEEE Conference on Decision and Control (CDC)}, pages
  1156--1161. IEEE, 2018.

\bibitem{zhang2015learning}
Xiaobin Zhang, Bo~Wu, and Hai Lin.
\newblock Learning based supervisor synthesis of pomdp for pctl specifications.
\newblock In {\em 2015 54th IEEE Conference on Decision and Control (CDC)},
  pages 7470--7475. IEEE, 2015.

\bibitem{abe2007study}
Konomu Abe, Hideki Miyatake, and Koji Oguri.
\newblock A study on switching ar-hmm driving behavior model depending on
  driver's states.
\newblock In {\em Intelligent Transportation Systems Conference, 2007. ITSC
  2007. IEEE}, pages 806--811. IEEE, 2007.

\bibitem{hansson1994logic}
Hans Hansson and Bengt Jonsson.
\newblock A logic for reasoning about time and reliability.
\newblock {\em Formal aspects of computing}, 6(5):512--535, 1994.

\bibitem{choi2011inverse}
Jaedeug Choi and Kee-Eung Kim.
\newblock Inverse reinforcement learning in partially observable environments.
\newblock {\em Journal of Machine Learning Research}, 12(Mar):691--730, 2011.

\bibitem{fox2009sharing}
Emily Fox, Michael~I Jordan, Erik~B Sudderth, and Alan~S Willsky.
\newblock Sharing features among dynamical systems with beta processes.
\newblock In {\em Advances in Neural Information Processing Systems}, pages
  549--557, 2009.

\bibitem{fox2014joint}
Emily~B Fox, Michael~C Hughes, Erik~B Sudderth, and Michael~I Jordan.
\newblock Joint modeling of multiple time series via the beta process with
  application to motion capture segmentation.
\newblock {\em The Annals of Applied Statistics}, pages 1281--1313, 2014.

\bibitem{rutten2004mathematical}
Jan~JMM Rutten.
\newblock {\em Mathematical techniques for analyzing concurrent and
  probabilistic systems}.
\newblock Number~23. American Mathematical Soc., 2004.

\bibitem{sondik1971optimal}
Edward~Jay Sondik.
\newblock The optimal control of partially observable markov processes.
\newblock Technical report, STANFORD UNIV CALIF STANFORD ELECTRONICS LABS,
  1971.

\bibitem{hoey2005solving}
Jesse Hoey and Pascal Poupart.
\newblock Solving pomdps with continuous or large discrete observation spaces.
\newblock In {\em IJCAI}, pages 1332--1338, 2005.

\bibitem{pineau2003point}
Joelle Pineau, Geoff Gordon, Sebastian Thrun, et~al.
\newblock Point-based value iteration: An anytime algorithm for pomdps.
\newblock In {\em IJCAI}, volume~3, pages 1025--1032, 2003.

\end{thebibliography}
\end{document}